\newif\ifdraft \drafttrue
\newcommand{\E}{\mathbb{E}}
\newtheorem{theorem}{Theorem}
\newtheorem{lemma}[theorem]{Lemma}
\newtheorem{assumption}[theorem]{Assumption}
\newtheorem{defn}{Definition}
\def\squareforqed{\hbox{\rule{2.5mm}{2.5mm}}}
\def\QED{\ifmmode\squareforqed 
  \else{\nobreak\hfil   
    \penalty50                 
    \hskip1em                  
    \null                      
    \nobreak                   
    \hfil                      
    \squareforqed              
    \parfillskip=0pt           
    \finalhyphendemerits=0     
    \endgraf}                  
  \fi}
\def\blksquare{\rule{2mm}{2mm}}
\def\qedsymbol{\blksquare}
\newcommand{\bg}[1]{\medskip\noindent{\bf #1}}
\newcommand{\ed}{{\hfill\qedsymbol}\medskip}
\newcommand{\comment}[1]{}
 {}
\newcommand{\junk}[1]{}
\newlength{\tmp} \newlength{\lpsx} \newlength{\lpsy} \newlength{\upsx} \newlength{\upsy}
\newcommand{\Omit}[1]{}
\title{A Proof of Orthogonal Double Machine Learning with $Z$-Estimators}
\author{Vasilis Syrgkanis\\
Microsoft Research}
\begin{document}
\maketitle    
\date{}

\begin{abstract}
We consider two stage estimation with a non-parametric first stage and a generalized method of moments second stage, in a simpler setting than \cite{Chernozhukov2016}. We give an alternative proof of the theorem given in Chernozhukov et al. \cite{Chernozhukov2016} that orthogonal second stage moments, sample splitting and $n^{1/4}$-consistency of the first stage, imply $\sqrt{n}$-consistency and asymptotic normality of second stage estimates. Our proof is for a variant of their estimator, which is based on the empirical version of the moment condition (Z-estimator), rather than a minimization of a norm of the empirical vector of moments (M-estimator). This note is meant primarily for expository purposes, rather than as a new technical contribution.
\end{abstract}
\section{Two-Stage Estimation}

Suppose we have a model which predicts the following set of moment conditions:
\begin{equation}
\E[m(Z,\theta_0, h_0(X))] = 0
\end{equation}
where $\theta_0\in R^d$ is a finite dimensional parameter of interest, $h_0: S\rightarrow R^\ell$ is a nuisance function we do not know, $Z$ are the observed data which are drawn from some distribution and $X\in S$ is a subvector of the observed data.

We want to understand the asymptotic properties of the following two-stage estimation process:
\begin{enumerate}
\item First stage. Estimate $h_0(\cdot)$ from an auxiliary data set (e.g. running some non-parametric regresssion) yielding an estimate $\hat{h}$. 
\item Second stage. Use the first stage estimate $\hat{h}$ and compute an estimate $\hat{\theta}$ of $\theta_0$ from an empirical version of the moment condition: i.e. 
\begin{equation}
\hat{\theta} \text{ solves }: \frac{1}{n} \sum_{t=1}^n m(Z_t, \hat{\theta}, \hat{h}(X_t)) =0
\end{equation}
\end{enumerate} 

The question we want to ask is: is $\hat{\theta}$ $\sqrt{n}$-consistent. More formally, is it true that:
\begin{equation}
\sqrt{n} (\hat{\theta}-\theta_0) \rightarrow N(0,\Sigma)
\end{equation}
for some constant co-variance matrix $\Sigma$. We will assume that the moment conditions that we use satisfy the following orthogonality property:
\begin{defn}[Orthogonality] For any fixed estimate $\hat{h}$ that can be the outcome of the first stage estimation, the moment conditions are orthogonal if: 
\begin{equation}
\E\left[\nabla_{\gamma} m(Z,\theta_0, h_0(X)) \cdot (\hat{h}(X)-h_0(X))\right] = 0
\end{equation}
where $\nabla_\gamma m(\cdot,\cdot, \cdot)$ denotes the gradient of $m$ with respect to its third argument.
\end{defn}

\section{Orthogonality Implies Root-$n$ Consistency}

\begin{assumption}\label{ass:main}
We will make the following regularity assumptions:
\begin{itemize}
\item {\bf Rate of First Stage.} The first stage estimation is $n^{-1/4}$-consistent in the squared mean-square-error sense, i.e.
\begin{equation}
n^{1/2} E_{X}\left[\|\hat{h}(X)-h_0(X)\|^2\right] \rightarrow_p 0
\end{equation}
where the convergence in probability statement is with respect to the auxiliary data set
\item {\bf Regularity of First Stage.} The first stage estimate and the nuisance function are uniformly bounded by a constant, i.e.: $\|\hat{h}(x)\|, \|h_0(x)\| \leq C$ for all $x\in S$.
\item {\bf Regularity of Moments.} The following smoothness conditions hold for the moments
\begin{enumerate}
\item For any $z, x, \gamma$ the function $m(z, \theta, \gamma)$ is continuous in $\theta$. Also $m(z, \theta, \gamma)\leq d(z)$ and $\E[d(Z)]<\infty$. 
\item Similarly, the same conditions hold for $\nabla_\theta m(z, \theta, \gamma)$. 
\item $\E\left[\nabla_\theta m(z,\theta_0,h_0(x))\right]$ is non-singular. 
\item the Hessian $\nabla_{\gamma\gamma} m(z,\theta,\gamma)$ has the largest eigenvalue bounded by some constant $\lambda$ uniformly for all $\theta$ and $\gamma$. 
\item the derivative $\nabla_\gamma m(z,\theta,\gamma)$ has norm, uniformly bounded by $\sigma$
\end{enumerate}
\end{itemize}
\end{assumption}

\begin{theorem} Under Assumption \ref{ass:main} and assuming that $\hat{\theta}$ is consistent, if the moment conditions satisfy the orthogonality property then $\hat{\theta}$ is also $\sqrt{n}$-consistent and asymptotically normal.
\end{theorem}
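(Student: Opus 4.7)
The plan is a two-step Taylor expansion: first in $\theta$ around $\theta_0$ to linearize the estimating equation, and then in the nuisance argument around $h_0(X_t)$ to isolate the effect of plugging in $\hat h$. From the defining equation $\frac{1}{n}\sum_t m(Z_t,\hat\theta,\hat h(X_t))=0$, a mean-value expansion in $\theta$ yields
\[
0 \;=\; \frac{1}{n}\sum_t m(Z_t,\theta_0,\hat h(X_t)) \;+\; A_n\,(\hat\theta-\theta_0),
\]
where $A_n = \frac{1}{n}\sum_t \nabla_\theta m(Z_t,\tilde\theta_t,\hat h(X_t))$ for some $\tilde\theta_t$ on the segment between $\hat\theta$ and $\theta_0$. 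Rearranging,
\[
\sqrt{n}\,(\hat\theta-\theta_0) \;=\; -A_n^{-1}\cdot \sqrt{n}\cdot \frac{1}{n}\sum_t m(Z_t,\theta_0,\hat h(X_t)).
\]

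Next I would show $A_n \to_p J := \E[\nabla_\theta m(Z,\theta_0,h_0(X))]$ by a standard uniform law of large numbers argument: the consistency of $\hat\theta$, continuity of $\nabla_\theta m$ in $\theta$, the integrable envelope $d(Z)$, the $L^2$-consistency of $\hat h$, and the uniform bound $\|\hat h\|,\|h_0\|\le C$, combined via dominated convergence after conditioning on the auxiliary sample (so $\hat h$ may be treated as deterministic). Non-singularity of $J$ from Assumption~\ref{ass:main}(3.c) then yields $A_n^{-1}\to_p J^{-1}$.

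The crux is to show
\[
\sqrt{n}\cdot\frac{1}{n}\sum_t m(Z_t,\theta_0,\hat h(X_t)) \;=\; \sqrt{n}\cdot\frac{1}{n}\sum_t m(Z_t,\theta_0,h_0(X_t)) + o_p(1).
\]
Expanding $m$ to second order in its third argument around $h_0(X_t)$ produces a linear term
\[
L_n \;:=\; \frac{1}{\sqrt{n}}\sum_t \nabla_\gamma m(Z_t,\theta_0,h_0(X_t))\cdot\bigl(\hat h(X_t)-h_0(X_t)\bigr)
\]
and a quadratic remainder $R_n$ bounded by $\tfrac{\lambda}{2\sqrt{n}}\sum_t \|\hat h(X_t)-h_0(X_t)\|^2$. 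Because $\hat h$ is trained on an independent auxiliary sample, conditioning on that sample leaves the summands of $L_n$ i.i.d.; orthogonality forces their conditional mean to vanish exactly, and the uniform bound $\sigma$ on $\nabla_\gamma m$ bounds their conditional variance by $\sigma^2\,\E_X[\|\hat h(X)-h_0(X)\|^2]$, which is $o_p(1)$ by the first-stage consistency; Chebyshev then gives $L_n=o_p(1)$. For $R_n$, the conditional expectation is $\tfrac{\lambda}{2}\sqrt{n}\,\E_X[\|\hat h(X)-h_0(X)\|^2]$, precisely the quantity the $n^{-1/4}$-rate assumption sends to zero in probability; Markov gives $R_n=o_p(1)$. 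The remaining infeasible sum $\tfrac{1}{\sqrt n}\sum_t m(Z_t,\theta_0,h_0(X_t))$ is a standard i.i.d.\ average to which the CLT applies, and Slutsky combined with $A_n^{-1}\to_p J^{-1}$ delivers $\sqrt{n}(\hat\theta-\theta_0)\Rightarrow N(0,\,J^{-1}V J^{-\top})$ with $V=\Var(m(Z,\theta_0,h_0(X)))$.

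The main obstacle is controlling $L_n$, which requires orthogonality and sample splitting simultaneously. Without orthogonality, $L_n$ would have conditional mean of order $\sqrt{n}\cdot\|\hat h - h_0\|_{L^2}$, which diverges at the assumed $n^{-1/4}$ rate; and without sample splitting, $\hat h$ would depend on the same $Z_t$ at which the moment is evaluated, so the ``zero conditional mean'' step fails and one instead needs an empirical-process / Donsker-type hypothesis on the class containing $\hat h$. The other pieces---consistency of the Jacobian, the quadratic remainder via the Hessian bound, and the CLT for the infeasible sum---are routine once the uniform bounds and the $n^{-1/4}$ rate in Assumption~\ref{ass:main} are in hand.
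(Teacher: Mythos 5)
Your proposal is correct and follows essentially the same route as the paper: the same mean-value expansion in $\theta$, the same second-order expansion in the nuisance argument splitting $B$ into a CLT term, a linear term killed by orthogonality plus a Chebyshev variance bound of $\sigma^2\E[\|\hat h - h_0\|^2]$, and a quadratic remainder controlled by the Hessian bound and the $n^{-1/4}$ rate, concluding with Slutsky. (Your sandwich form $J^{-1}VJ^{-\top}$ is in fact the more carefully stated version of the paper's limiting covariance.)
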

\begin{proof}
By doing a first-order Taylor expansion of the empirical moment condition around $\theta_0$ and by the mean value theorem, we have:
\begin{equation}
\sqrt{n}(\hat{\theta}-\theta_0) =\underbrace{\left[ \frac{1}{n} \sum_{t=1}^n
\nabla_\theta m(Z_t, \tilde{\theta}, \hat{h}(X_t))\right]^{-1}}_{A} \underbrace{\frac{1}{\sqrt{n}} \sum_{t=1}^n m(Z_t,\theta_0, \hat{h}(X_t))}_{B}
\end{equation}
where $\tilde{\theta}$ is convex combination of $\theta_0$ and $\hat{\theta}$. We will show that $A$ converges in probability to a constant $J^{-1}$ and that $B$ converges in distribution to a normal $N(0,V)$, for some constant co-variance matrix $V$. Then the theorem follows by invoking Slutzky's theorem, which shows convergence in distribution to $N(0,J^{-1} V)$.

\paragraph{Convergence of $A$ to inverse derivative.} By the regularity of the moments, we have a uniform law of large numbers for the quantity $\frac{1}{n} \sum_{t=1}^n
\nabla_\theta m(Z_t, \theta, \hat{h}(X))$, i.e.:
\begin{equation}
\sup_{\theta\in \Theta} \left\|\frac{1}{n} \sum_{t=1}^n
\nabla_\theta m(Z_t, \theta, \hat{h}(X))- \E[\nabla_\theta m(Z, \theta, \hat{h}(X))]\right\| \rightarrow_p 0
\end{equation}
Since $\hat{\theta}$ is consistent, we also have that $\tilde{\theta}$ is consistent, i.e. $\tilde{\theta}\rightarrow_p \theta$. Combining the latter two properties, we get that conditional on the auxiliary data set:
\begin{equation}
\frac{1}{n} \sum_{t=1}^n \nabla_\theta m(Z_t, \tilde{\theta}, \hat{h}(X))
\rightarrow \E\left[\nabla_\theta m(Z,\theta_0, \hat{h}(X)\right]
\end{equation}
Moreover, since $\hat{h}$ is consistent we get that:
\begin{equation}
\frac{1}{n} \sum_{t=1}^n \nabla_\theta m(Z_t, \tilde{\theta}, \hat{h}(X))
\rightarrow \E\left[\nabla_\theta m(Z,\theta_0, h_0(X)\right]
\end{equation}
Since the matrix $\E\left[\nabla_\theta m(z,\theta_0,h_0(x))\right]$ is non-singular, by continuity of the inverse we get:
\begin{equation}
\left[\frac{1}{n} \sum_{t=1}^n \nabla_\theta m(Z_t, \tilde{\theta}, \hat{h}(X))\right]^{-1}
\rightarrow \left[\E\left[\nabla_\theta m(Z,\theta_0, h_0(X))\right]\right]^{-1} = J^{-1}
\end{equation}

\paragraph{Asymptotic normality of $B$.} To argue asymptotic normality of $B$ we take a second-order Taylor expansion of $B$ around $h_0(X_t)$ for each $X_t$: 
\begin{multline}
B = \underbrace{\frac{1}{\sqrt{n}} \sum_{t=1}^n m(Z_t, \theta_0, h_0(X_t))}_{C} +
\underbrace{\frac{1}{\sqrt{n}} \sum_{t=1}^n \nabla_{\gamma} m(Z_t, \theta_0, h_0(X_t)) \cdot \left(\hat{h}(X_t) - h_0(X_t)\right)}_{D}\\ + \underbrace{\frac{1}{2\sqrt{n}} \sum_{t=1}^n \left(\hat{h}(X_t) - h_0(X_t)\right)^T \nabla_{\gamma\gamma} m(Z_t,\theta_0, \tilde{h}(X_t)) \cdot \left(\hat{h}(X_t)-h_0(X_t)\right)}_{E}
\end{multline}
First we observe that $C$ is the sum of $n$ i.i.d. random variables, divided by $\sqrt{n}$. Thus by the Central Limit Theorem, we get that $C\rightarrow N(0,V)$, for some constant co-variance matrix $V$. Then we conclude by showing that $D, E\rightarrow_p 0$.

Second we argue that $n^{1/4}$ consistency of the first stage, implies that $E\rightarrow_p 0$. Since $\nabla_{\gamma\gamma} m(z,\theta,\gamma)$ has a largest eigenvalue uniformly bounded by $\lambda^*$, we have that the quantity $E$ is bounded by 
\begin{equation}
|E|\leq \frac{\lambda^*}{2} \sqrt{n} \left(\frac{1}{n} \sum_{t=1}^n \|\hat{h}(X_t)-h_0(X_t)\|^2 \right)
\end{equation} 
Fixing the auxiliary data set, the quantity $\frac{1}{n} \sum_{t=1}^n \|\hat{h}(X_t)-h_0(X_t)\|^2$ converges to $\E[\|\hat{h}(X_t)-h_0(X_t)\|^2]$. Subsequently by $n^{1/4}$-consistency of the first stage, and regularity of the first stage, we get that $E\rightarrow_p 0$.

Finally, we argue that orthogonality implies that $D\rightarrow_p 0$. We show that both the mean and the trace of the co-variance of $D$ converge to $0$. The mean conditional on the auxiliary data set is:
\begin{align}
E[D~|~\hat{h}] = \sqrt{n} E\left[\nabla_{\gamma} m(Z, \theta_0, h_0(X)) \cdot \left(\hat{h}(X) - h_0(X)\right)~|~\hat{h}\right] = 0
\end{align}
The diagonal entries of the co-variance conditional on the auxiliary dataset is:
\begin{multline*}
E[D^2~|~\hat{h}] = \frac{1}{n} \sum_{t\neq t'} E\left[\nabla_{\gamma} m(Z, \theta_0, h_0(X_t)) \cdot \left(\hat{h}(X) - h_0(X)\right)~|~\hat{h}\right]^2\\ + \frac{1}{n}\sum_{t=t'}  E\left[\|\nabla_{\gamma} m(Z, \theta_0, h_0(X)) \cdot \left(\hat{h}(X) - h_0(X)\right)\|^2~|~\hat{h}\right]
\end{multline*}
All the cross terms are zero by orthogonality, giving:
\begin{equation}
E[D^2~|~\hat{h}] =  E\left[\|\nabla_{\gamma} m(Z, \theta_0, h_0(X)))^2 \cdot \left(\hat{h}(X) - h_0(X)\right)\|^2\right]\leq \sigma^2 E\left[\|\hat{h}(X) - h_0(X)\|^2\right]
\end{equation}
Since $\hat{h}$ is consistent, we get that the latter converges to zero. Since the mean of $D$ and the trace of its co-variance converge to zero, we get that $D\rightarrow_p 0$. 
\end{proof}

Consistency of the estimator also follows easily from standard arguments, if one makes Assumption \ref{ass:main} and the extra condition that the moment condition in the limit is satisfied only for the true parameters, which is needed for identification (see e.g. \cite{Newey1994} for the formal set of extra regularity assumptions needed for consistency).

\section{Orthogonal Moments for Conditional Moment Problems}

One special case of when the orthogonality condition is satisfied is the following stronger, but easier to check property of conditional orthogonality:
\begin{defn}[Conditional Orthogonality] The moment conditions are conditionally orthogonal if:
\begin{equation}
\E\left[\nabla_\gamma m(Z, \theta_0, h_0(X)) | X\right] = 0
\end{equation} 
\end{defn}

\begin{lemma}
Conditional orthogonality implies orthogonality, when an auxiliary data set is used to estimate $\hat{h}$.
\end{lemma}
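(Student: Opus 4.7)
\smallskip

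The plan is to reduce the orthogonality expectation to the conditional orthogonality expectation via the tower property of conditional expectation, exploiting the fact that $\hat{h}$ is built from an auxiliary sample and is therefore independent of the $(Z,X)$ used in the outer expectation. Concretely, I would condition on the pair $(X, \hat{h})$ inside the orthogonality expression
\[
\E\!\left[\nabla_{\gamma} m(Z,\theta_0,h_0(X)) \cdot (\hat{h}(X)-h_0(X))\right].
\]

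The first step is to note that, given $(X,\hat{h})$, the factor $\hat{h}(X)-h_0(X)$ is a deterministic vector, so it can be pulled outside the inner conditional expectation, leaving $\E[\nabla_\gamma m(Z,\theta_0,h_0(X))\mid X,\hat{h}]$ multiplied by that vector. The second step is to observe that because $\hat{h}$ is constructed from a separate auxiliary dataset, it is independent of the observation $Z$ in the main sample (even conditional on $X$), so $\E[\nabla_\gamma m(Z,\theta_0,h_0(X))\mid X,\hat{h}] = \E[\nabla_\gamma m(Z,\theta_0,h_0(X))\mid X]$. The third step is to apply the conditional orthogonality hypothesis, which makes this inner conditional expectation identically zero; taking the outer expectation then yields $0$.

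The main (and really only) obstacle is being careful about the independence between $\hat{h}$ and the main-sample random variables: if $\hat{h}$ were computed from the same data used to evaluate the moment, we could not swap the conditioning as above and the argument would break down. This is exactly why the lemma is stated with an auxiliary data set used to estimate $\hat{h}$, and why sample splitting plays a role in the overall theorem. Once this independence is articulated cleanly, the conditional-orthogonality-to-orthogonality implication is a one-line tower-property calculation.
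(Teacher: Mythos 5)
Your proposal is correct and follows essentially the same route as the paper: condition on $(\hat{h},X)$, pull out the now-deterministic factor $\hat{h}(X)-h_0(X)$, and apply conditional orthogonality. You are in fact slightly more careful than the paper's one-line calculation, since you explicitly justify why conditioning on $\hat{h}$ in addition to $X$ is harmless (independence of the auxiliary sample), a step the paper leaves implicit.
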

\begin{proof}
By the law of iterated expectations we have:
\begin{align*}
\E\left[\nabla_{\gamma} m(Z,\theta_0, h_0(X)) \cdot (\hat{h}(X)-h_0(X))\right] =~& \E\left[\E\left[\nabla_{\gamma} m(Z,\theta_0, h_0(X)) \cdot (\hat{h}(X)-\hat{h}(X))~|~\hat{h},X\right]\right]\\
=~& \E\left[\E\left[\nabla_{\gamma} m(Z,\theta_0, h_0(X))~|~\hat{h},X\right] \cdot (\hat{h}(X)-\hat{h}(X))\right] = 0
\end{align*}
Where in the last part we used the conditional orthogonality property.
\end{proof}

For conditional moment problems studied in \cite{Chamberlain1992}, \cite{Chernozhukov2016} shows how one can transform in an algorithmic manner an initial set of moments to a vector of orthogonal moments.

\bibliographystyle{alpha}
\bibliography{ortho}

\end{document}
